\documentclass{article}

\usepackage{fullpage}
\usepackage{amsmath,amssymb,amsthm,mathtools}
\usepackage{booktabs}
\usepackage{graphicx}
\usepackage{microtype}
\usepackage{url}
\usepackage{natbib}
\usepackage{xcolor}
\usepackage{hyperref}

\setcitestyle{authoryear,round,citesep={;},aysep={,},yysep={;}}
\graphicspath{{./}}

\hypersetup{
  colorlinks=true,
  citecolor=blue,
  linkcolor=blue,
  urlcolor=blue,
  pdfauthor={Vicente Opazo},
  pdftitle={Three Tokens Force Exponential Feature Rank in Nonnegative Kernel Attention}
}

\newtheorem{theorem}{Theorem}
\newtheorem{lemma}[theorem]{Lemma}
\newtheorem{corollary}[theorem]{Corollary}
\newtheorem{proposition}[theorem]{Proposition}

\newcommand{\bits}{\{0,1\}}
\newcommand{\ip}[2]{\langle #1,#2\rangle}
\newcommand{\cG}{\mathcal{G}}
\newcommand{\cC}{\mathcal{C}}
\newcommand{\cQ}{\mathcal{Q}}

\title{Three Tokens Force Exponential Feature Rank\\
in Nonnegative Kernel Attention}

\author{
  Vicente Opazo\\
  CENIA\\
  \texttt{vicente.opazo@cenia.cl}
}
\date{}

\begin{document}

\maketitle

\begin{abstract}
Full attention exposes every token pair, whereas kernel attention compresses a sequence into a fixed-dimensional sketch. We show that this distinction becomes exponential at the first context length containing two competing candidates. On Min-IP over Boolean inputs, rank-one normalized kernel attention solves every sequence of length at most two exactly. In contrast, any single normalized nonnegative kernel-attention head that succeeds on all three-token sequences with error strictly below $1/2$ requires $2^{\Omega(m)}$ features, even with arbitrary finite-dimensional tokenwise values and an arbitrary query-dependent affine readout. Dense softmax solves the same task with $m$-dimensional scores and constant temperature. The conclusion survives position-dependent token maps and a causal final query. As context length grows, the lower bound approaches the exact $2^m$-feature realization. Separately, for deterministic multihead, multilayer sketch models whose cross-token channels have finite alphabets, we prove a transcript lower bound linear in the number of independent answers and logarithmic in their alphabet size.
\end{abstract}

\section{Introduction}

One way to study the expressive power of attention is to isolate the basic operations that an attention layer makes available to a larger computation. Content-dependent selection is especially native to attention: a query compares several candidate tokens and routes information according to the most favorable score. Nearest-neighbor and minimum-inner-product primitives have therefore served as testbeds for separating attention architectures, as seen in \citet{zaheer2020bigbird}, \citet{amsel2025quality}, and \citet{alman2025fundamental}.

Full self-attention implements this primitive explicitly: each query compares itself directly with every token in a sequence~\citep{vaswani2017attention}. This interaction materializes $N^2$ pairwise scores. Kernelized linear attention instead factors the attention kernel and aggregates the sequence into sufficient statistics such as a key--value matrix and a key sum, as in \citet{katharopoulos2020transformers} and \citet{choromanski2021performers}. It avoids forming the pairwise matrix, but every query now receives the rest of the sequence through a shared, fixed-dimensional sketch. The question is thus not merely whether individual kernel entries can be approximated, but whether explicit comparison can be replaced by a compressed additive representation.

We isolate this question through \emph{Min-IP}. Given a sequence $X=(x_1,\ldots,x_N)$ of Boolean tokens $x_i\in\bits^m$, the target at every position is
\begin{equation}
  t_i(X)=\min_{1\le j\le N}\ip{x_i}{x_j}.
  \label{eq:task-intro}
\end{equation}
Asking only for the best score, rather than the identity of a minimizing token, removes key--value routing and output-encoding complications: the model only has to perform the comparison. Self-token interactions, repeated tokens, and ties are included. The task is inspired by Orthogonal Vectors because $t_i=0$ exactly when $x_i$ has an orthogonal partner in the sequence, but our lower bounds are unconditional architecture theorems rather than fine-grained complexity consequences.

Why focus on the shortest possible context? Linear attention is often motivated as a long-context alternative to full attention, so its limitations are naturally associated with compressing many tokens into a fixed-size state. We show that the obstruction appears much earlier. At length two, each query has only one cross-token candidate, and a rank-one normalized average retains enough information for an affine tokenwise readout to recover the answer. At length three, the query must choose between two candidates, and uniform minimum retrieval already requires exponentially many nonnegative kernel features. Long contexts amplify the bottleneck, but do not create it.

The context-length phase transition is sharp: rank one solves all lengths at most two exactly, whereas length three requires $2^{\Theta(m)}$ nonnegative kernel features. Values may be arbitrary finite-dimensional vectors and the readout may be an arbitrary query-dependent affine map, so the separation is not caused by identity values or a narrow value projection. The same exponential conclusion survives tokenwise positional maps and a causal database-prefix/final-query formulation. As the fixed context length grows, the exponent approaches one, nearly matching an exact $2^m$ positive-feature endpoint.

The proof uses only three families of inputs of the same length. An exact linear identity eliminates all value contributions and forces a constant multiplicative preference whenever two candidate overlaps differ by a fixed gap. Iterating over intermediate Boolean overlaps amplifies this preference exponentially. A constant-weight code and a nonsymmetric approximate-identity lemma then turn it into a rank lower bound.

We separately obtain a finite-precision result for broader models. A typed multilevel family embeds $(s+1)^q$ independent output vectors into length-$2q$ Min-IP instances. Any deterministic multilayer sketch model that solves them must therefore communicate at least $q\log_2(s+1)$ bits. This result covers multiple heads, layers, signed sketches, and arbitrary local computation, provided every cross-token channel is counted and has a finite alphabet. It is an information bound, not a per-head rank theorem.

\paragraph{Contributions.} We contribute three results:
\begin{itemize}
\item A sharp three-token phase transition: rank one suffices through total length two, while total length three requires $2^{\Omega(m)}$ nonnegative kernel features under arbitrary values and affine query-dependent readout, including position-dependent maps and a causal final query.
\item A fixed-length amplification mechanism whose exponent approaches the exact $2^m$ positive-feature endpoint, while dense softmax succeeds with $m$-dimensional scores and constant temperature at length three.
\item A deterministic $q\log_2(s+1)$ finite-transcript bound for multiple heads and layers, plus two controlled feature-capacity experiments that separate proof witnesses from learned OOD behavior.
\end{itemize}

\section{Task and architecture}
\label{sec:setup}

\subsection{Min-IP}

Fix integers $m,n\ge1$. Let $\mathcal X_{m,n}$ be the set of sequences $X=(x_1,\ldots,x_N)$ with $1\le N\le n$ and tokens $x_i\in\bits^m$. A model solves the task with error $\varepsilon$ if it outputs real scalars $\widehat t_i(X)$ satisfying
\begin{equation}
  \left|\widehat t_i(X)-t_i(X)\right|<\varepsilon
  \qquad
  \forall X\in\mathcal X_{m,n},\quad
  \forall i\in\{1,\ldots,|X|\}.
  \label{eq:strict-error}
\end{equation}
The main exact-recovery regime is $0<\varepsilon\le1/2$. Since $t_i$ is an integer and the inequality is strict, nearest-integer rounding uniquely recovers every target. The lower bound already holds under the weaker requirement of correctness only at length three. Let $r^\star(m,n)$ denote the minimum positive feature dimension of a single architecture that succeeds at all lengths up to $n$ with error $<1/2$.

The cross-attention version has fixed queries $q_i$ and a database $d_j$, with target $\min_j\ip{q_i}{d_j}$. Type coordinates can embed it into self-attention while making every query--query and unintended database overlap larger than the desired minimum. Section~\ref{sec:multilevel} uses this device.

\subsection{Dense softmax attention}

One head computes
$$
  s_{ij}=q(x_i)^\top k(x_j),\qquad
  w_{ij}=\frac{e^{s_{ij}}}{\sum_{\ell=1}^N e^{s_{i\ell}}},\qquad
  y_i=\sum_{j=1}^Nw_{ij}v(x_j),
$$
followed by a tokenwise scalar readout. We call this \emph{dense} or \emph{full softmax attention} because every query scores every token.

\subsection{Normalized nonnegative kernel attention}
\label{sec:kernel-model}

Let $\phi_Q,\phi_K:\bits^m\to\mathbb{R}^r$ induce
$$
  \alpha(x,z)=\ip{\phi_Q(x)}{\phi_K(z)}\ge0
  \qquad\text{on the Boolean domain.}
$$
The value map $v:\bits^m\to\mathbb{R}^{d_v}$ is arbitrary. The attention output for query $x_i$ is
\begin{equation}
  a_i(X)=
  \frac{\sum_j\alpha(x_i,x_j)v(x_j)}
       {\sum_j\alpha(x_i,x_j)},
  \label{eq:kernel-attention}
\end{equation}
where the denominator must be positive on every valid input. The readout is an arbitrary query-dependent affine map
\begin{equation}
  \widehat t_i=\beta(x_i)+w(x_i)^\top a_i(X).
  \label{eq:affine-readout}
\end{equation}
There are no positions, additional heads or layers, nonlinear post-attention lookup, or other cross-token channels in the headline theorem.

The sequence enters Equation~\eqref{eq:kernel-attention} only through
$$
  S=\sum_j\phi_K(x_j)v(x_j)^\top\in\mathbb{R}^{r\times d_v},
  \qquad
  z=\sum_j\phi_K(x_j)\in\mathbb{R}^r,
$$
since $a_i=\phi_Q(x_i)^\top S/[\phi_Q(x_i)^\top z]$. Over the Boolean domain, write $[m]=\{1,\ldots,m\}$ and let $x_S\in\bits^m$ denote the indicator vector of $S\subseteq[m]$. Then the kernel matrix $A_{S,T}=\alpha(x_S,x_T)$ satisfies $A=\Phi_Q\Phi_K^\top$ and hence $\operatorname{rank}(A)\le r$.

\section{A three-token exponential separation}
\label{sec:rank}

\subsection{Dense and exact-feature upper bounds}

\begin{theorem}[Dense softmin]
\label{thm:dense}
Use scores $s_{ij}=-\tau\ip{x_i}{x_j}$, values $v(x_j)=x_j$, and readout $\widehat t_i=\ip{x_i}{y_i}$. For $0<\varepsilon\le1/2$, if $\tau\ge\log(n/\varepsilon)$, then $0\le\widehat t_i-t_i<\varepsilon$ for every sequence of length at most $n$. At length three and $\varepsilon=1/2$, the sufficient temperature is the constant $\log6$.
\end{theorem}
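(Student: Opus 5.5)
Fix a query position $i$ and write $c_j=\ip{x_i}{x_j}\in\{0,\ldots,m\}$ and $t=t_i=\min_j c_j$. With the stated choices, $y_i=\sum_j w_{ij}x_j$, so by linearity of the inner product
$$
  \widehat t_i=\ip{x_i}{y_i}=\sum_{j=1}^N w_{ij}c_j,
  \qquad
  w_{ij}=\frac{e^{-\tau c_j}}{\sum_{\ell}e^{-\tau c_\ell}}.
$$
So $\widehat t_i$ is a convex combination of the integers $c_j$, and the whole argument reduces to showing that softmin with large temperature puts almost all of its mass on the minimizers. The lower bound $\widehat t_i\ge t$ is immediate, since every $c_j\ge t$ and the weights are nonnegative and sum to one.

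For the upper bound, subtract $t$ and write
$$
  \widehat t_i-t=\sum_j w_{ij}(c_j-t).
$$
Only indices with $c_j\ge t+1$ contribute. For such $j$, dividing numerator and denominator by $e^{-\tau t}$ and lower-bounding the denominator by the single term of one minimizer (which contributes $1$) gives $w_{ij}\le e^{-\tau(c_j-t)}$. Hence
$$
  \widehat t_i-t\le\sum_{j:\,c_j>t}(c_j-t)e^{-\tau(c_j-t)}.
$$

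The one step that needs a small argument is bounding $g(d)=d\,e^{-\tau d}$ uniformly over integers $d\ge1$. I will show $g(d)\le e^{-\tau}$, i.e.\ $g$ is nonincreasing on the integers $d\ge1$. The ratio $g(d+1)/g(d)=\frac{d+1}{d}e^{-\tau}\le 2e^{-\tau}$ is at most $1$ once $\tau\ge\log 2$. This holds because $\tau\ge\log(n/\varepsilon)\ge\log 2$ for $n\ge1$ and $\varepsilon\le1/2$.

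The number of nonminimizing indices is at most $N-1\le n-1$, so
$$
  \widehat t_i-t\le(n-1)e^{-\tau}\le (n-1)\frac{\varepsilon}{n}<\varepsilon .
$$
Note that the self-token term and ties only help: each minimizer has $c_j-t=0$, and the strict inequality comes from $n-1<n$. For the length-three constant, take $n=3$ and $\varepsilon=1/2$, which gives $\tau\ge\log(3/(1/2))=\log 6$, a constant independent of $m$.
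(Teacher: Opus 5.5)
Your proof is correct and follows essentially the same route as the paper: both cancel the common factor $e^{-\tau t_i}$, bound the denominator below by a minimizer's contribution of $1$, use $\tau\ge\log 2$ to get $d\,e^{-\tau d}\le e^{-\tau}$ for integers $d\ge1$ (the paper via $d\le 2^{d-1}$, you via the ratio test), and sum over at most $n-1$ nonminimizers. The $\log 6$ constant is then the specialization $n=3$, $\varepsilon=1/2$, exactly as you state.
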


\begin{proof}
Fix query $i$, let $M_i$ be its minimizers, and put $d_j=\ip{x_i}{x_j}-t_i\in\mathbb Z_{\ge0}$. The readout averages the overlaps $t_i+d_j$. Cancelling the common factor $e^{-\tau t_i}$ gives
$$
 \widehat t_i-t_i
 =\frac{\sum_jd_je^{-\tau d_j}}{\sum_je^{-\tau d_j}}
 =\frac{\sum_{j\notin M_i}d_je^{-\tau d_j}}
 {|M_i|+\sum_{j\notin M_i}e^{-\tau d_j}}.
$$
This is nonnegative. Since $\tau\ge\log(n/\varepsilon)\ge\log2$ and $d\le2^{d-1}$ for $d\ge1$, each numerator term is at most $e^{-\tau}$. Thus the error is at most $(n-1)e^{-\tau}<\varepsilon$ (and is zero for $n=1$). The calculation includes ties and repetitions.
\end{proof}

\begin{theorem}[Exact positive features]
\label{thm:exact-feature}
For every $\tau>0$, the Boolean kernel $K_\tau(x,z)=e^{-\tau\ip{x}{z}}$ has an exact nonnegative feature factorization of dimension $2^m$, and every exact real bilinear factorization of this kernel has dimension at least $2^m$. For $0<\varepsilon\le1/2$, choosing $\tau\ge\log(n/\varepsilon)$ and the values and readout of Theorem~\ref{thm:dense} therefore solves Min-IP at all lengths up to $n$ with error $<\varepsilon$ using $r=2^m$. In particular, $r^\star(m,n)\le2^m$.
\end{theorem}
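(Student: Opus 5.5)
The plan has three parts: build an explicit nonnegative factorization of size $2^m$, show the matrix has full rank $2^m$, and then plug the kernel into Theorem~\ref{thm:dense}.

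\emph{Upper bound.} The kernel factors coordinatewise:
$$
K_\tau(x,z)=\prod_{k=1}^m e^{-\tau x_kz_k}.
$$
For $b,c\in\bits$ write $e^{-\tau bc}=1-(1-e^{-\tau})bc$ and set $\lambda=1-e^{-\tau}\in(0,1)$. Expanding the product over subsets gives
$$
K_\tau(x,z)=\sum_{S\subseteq[m]}(-\lambda)^{|S|}\prod_{k\in S}x_kz_k .
$$
This expansion has signed coefficients, so it does not directly give nonnegative features. Instead I will use the tensor-product factorization of the $2\times2$ matrix $B=\bigl(e^{-\tau bc}\bigr)_{b,c\in\bits}=\begin{pmatrix}1&1\\1&e^{-\tau}\end{pmatrix}$, whose entries are all positive.

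For the $2\times2$ case, choose nonnegative $2$-dimensional features. One option is $\phi_Q(b)=e_b$ and $\phi_K(c)=(B_{0c},B_{1c})$. This is the trivial factorization $B=I\cdot B$, and both factors are nonnegative. Taking Kronecker products gives $\phi_Q(x)=\bigotimes_k e_{x_k}$ and $\phi_K(z)=\bigotimes_k(B_{0z_k},B_{1z_k})$. These are nonnegative vectors of dimension $2^m$ whose inner product is $\prod_k B_{x_kz_k}=K_\tau(x,z)$. Equivalently, $\phi_Q(x)$ is the indicator of $x$ and $\phi_K(z)$ is the column $K_\tau(\cdot,z)$, so the whole construction is just $A=I\cdot A$, which is nonnegative because $K_\tau>0$. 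The denominator $\sum_j\alpha(x_i,x_j)$ is then positive, as required.

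\emph{Lower bound.} The $2^m\times2^m$ matrix $A=(K_\tau(x_S,x_T))$ equals $B^{\otimes m}$ under the natural ordering. Since $\det B=e^{-\tau}-1\neq0$ for $\tau>0$, the Kronecker power $B^{\otimes m}$ is invertible, with $\det$ a power of $\det B$. Any real bilinear factorization $K_\tau(x,z)=\ip{\phi(x)}{\psi(z)}$ of dimension $r$ gives $A=\Phi\Psi^\top$, so $\operatorname{rank}A\le r$, and therefore $r\ge2^m$.

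\emph{Solving Min-IP.} The kernel attention of Equation~\eqref{eq:kernel-attention} with $\alpha=K_\tau$, values $v(x)=x$ and readout $\widehat t_i=\ip{x_i}{a_i}$ computes exactly the dense softmin output of Theorem~\ref{thm:dense}, because the softmax weights $e^{-\tau\ip{x_i}{x_j}}$ normalized coincide with the normalized kernel weights. That readout is affine in $a_i$ with query-dependent $w(x_i)=x_i$ and $\beta=0$. Theorem~\ref{thm:dense} then gives error $<\varepsilon$ at all lengths up to $n$ whenever $\tau\ge\log(n/\varepsilon)$. With $\varepsilon=1/2$ this yields $r^\star(m,n)\le2^m$.

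I expect no real obstacle here. The only point needing care is the full-rank claim, which the Kronecker determinant identity $\det(B^{\otimes m})=(\det B)^{m2^{m-1}}$ settles.
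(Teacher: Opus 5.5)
Your proposal is correct and follows essentially the same route as the paper: the trivial factorization $A=I\cdot A$ with $\phi_Q(x)=e_x$ and $\phi_K(z)=K_\tau(\cdot,z)>0$, full rank of $A=B^{\otimes m}$ from the invertibility of the one-bit matrix, and a direct reduction to Theorem~\ref{thm:dense}. The only cosmetic difference is that you certify full rank via $\det(B^{\otimes m})=(\det B)^{m2^{m-1}}\neq0$, whereas the paper uses the fact that rank is multiplicative under Kronecker products.
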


\begin{proof}
Index coordinates by $u\in\bits^m$, take $\phi_Q(x)=e_x$, and set $\phi_K(z)_u=K_\tau(u,z)>0$. Their inner product is $K_\tau(x,z)$, so every normalizer is positive. With $v(z)=z$, this reproduces Theorem~\ref{thm:dense}. The full cube kernel matrix is $\bigl(\begin{smallmatrix}1&1\\1&e^{-\tau}\end{smallmatrix}\bigr)^{\otimes m}$, of rank $2^m$ because the one-bit matrix has rank two. Any bilinear factorization through $\mathbb R^r$ has matrix rank at most $r$.
\end{proof}

\subsection{Fixed-length domination and amplification}

Fix query $x$ and define $g_x(z)=\beta(x)+w(x)^\top v(z)$. Normalization reduces arbitrary values and affine readout to
\begin{equation}
  \widehat t_x(X)=
  \frac{\sum_j\alpha(x,x_j)g_x(x_j)}
       {\sum_j\alpha(x,x_j)}.
  \label{eq:scalarization}
\end{equation}

\begin{lemma}[Fixed-length gap domination]
\label{lem:gap-domination}
Assume correctness with error strictly below $0<\varepsilon\le1/2$ on every sequence of one fixed length exactly $n\ge3$. If
$$
 t=\ip{x}{y}<u=\ip{x}{z}\le\ip{x}{x},\qquad d=u-t,
$$
and $d\ge2\varepsilon(n-1)/(n-2)$, then
\begin{equation}
 \alpha(x,y)>
 \frac{(n-2)(d-2\varepsilon)}{2\varepsilon}\alpha(x,z).
 \tag{GD}
\end{equation}
For $\varepsilon=1/2$, every integer gap $d\ge2$ gives the factor $(n-2)(d-1)$.
\end{lemma}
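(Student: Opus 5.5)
The plan is to fix the query $x$ and use the scalarization \eqref{eq:scalarization}. This reduces the arbitrary values and affine readout to three unknown reals $G_x=g_x(x)$, $G_y=g_x(y)$, $G_z=g_x(z)$. They are weighted by the three kernel values $a=\alpha(x,x)$, $b=\alpha(x,y)$, $c=\alpha(x,z)$, all nonnegative. I would then test correctness on three sequences of the single length $n$, each with query $x$ in first position:
\[
 A=(x,z^{n-1}),\qquad B=(x,y,z^{n-2}),\qquad C=(x,y^{n-1}).
\]
Because $t<u\le\ip{x}{x}$, the targets at position one are $t_1(A)=u$ and $t_1(B)=t_1(C)=t$. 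Write $N_\bullet$ and $D_\bullet$ for the numerator and denominator of \eqref{eq:scalarization} on each sequence. For example, $N_A=aG_x+(n-1)cG_z$ and $D_A=a+(n-1)c$. Each $D_\bullet$ is positive by the standing assumption on the normalizer.

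The key step is an exact linear identity that eliminates the unknown values. Put $\lambda=(n-2)/(n-1)$ and $\mu=1/(n-1)$, so that $\lambda+\mu=1$. Matching the coefficients of $G_z$, $G_y$ and $G_x$ gives $N_B=\lambda N_A+\mu N_C$, and the same weights give $D_B=\lambda D_A+\mu D_C$. Hence $\widehat t_B$ is the mediant
\[
 \widehat t_B=\frac{\lambda D_A\,\widehat t_A+\mu D_C\,\widehat t_C}{\lambda D_A+\mu D_C},
\]
a convex combination of the outputs on $A$ and $C$. It carries no further dependence on $G$. Now substitute the correctness bounds $\widehat t_A>u-\varepsilon$ and $\widehat t_C>t-\varepsilon$. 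This yields $\widehat t_B>t-\varepsilon+w_Ad$, where $w_A=\lambda D_A/(\lambda D_A+\mu D_C)$. Combining with $\widehat t_B<t+\varepsilon$ gives $w_Ad<2\varepsilon$, which after clearing denominators becomes
\[
 \lambda D_A(d-2\varepsilon)<2\varepsilon\,\mu D_C .
\]

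Finally I would expand this inequality in $a,b,c$. Multiplying by $n-1$ gives
\[
 (n-2)(a+(n-1)c)(d-2\varepsilon)<2\varepsilon\,(a+(n-1)b),
\]
and rearranging gives
\[
 2\varepsilon(n-1)b>(n-1)(n-2)(d-2\varepsilon)c+a\bigl[(n-2)(d-2\varepsilon)-2\varepsilon\bigr].
\]
The hypothesis $d\ge2\varepsilon(n-1)/(n-2)$ is exactly the statement that the bracket is nonnegative. Since $a\ge0$, the last term can be dropped, and dividing by $2\varepsilon(n-1)$ gives (GD). For $\varepsilon=1/2$ the threshold is $(n-1)/(n-2)\le2$, so every integer gap $d\ge2$ qualifies and the factor becomes $(n-2)(d-1)$.

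The main obstacle is finding the identity $N_B=\lambda N_A+\mu N_C$, $D_B=\lambda D_A+\mu D_C$: the choice of the three families must make all value contributions cancel simultaneously. Once the coefficients $\lambda,\mu$ are fixed by the $z$- and $y$-multiplicities, the remaining steps are sign bookkeeping. The only care needed there is that $a$ appears on both sides and must be discarded using the gap threshold.
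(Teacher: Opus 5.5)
Your proof is correct and is the paper's argument. Your sequences $A,B,C$ are the paper's $Z,M,Y$, and your identity $N_B=\lambda N_A+\mu N_C$ with $\lambda=(n-2)/(n-1)$, $\mu=1/(n-1)$ is exactly the paper's $N_M-L^{-1}N_Y-(K/L)N_Z=0$. The final expansion and the dropping of the $a$-term under the gap hypothesis also match. The only differences are presentational: you write $\widehat t_B$ as a convex combination of $\widehat t_A$ and $\widehat t_C$ using unshifted values, whereas the paper shifts the values by $t$ and adds the three strict inequalities directly.
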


\begin{proof}
Put $L=n-1$, $K=n-2$, and $a=\alpha(x,x)$, $b=\alpha(x,y)$, $c=\alpha(x,z)$. After subtracting $t$ from the scalarized values, write $p=g_x(x)-t$, $q=g_x(y)-t$, and $s=g_x(z)-t$. On the three exact-length inputs $Y=(x,y^L)$, $Z=(x,z^L)$, and $M=(x,y,z^K)$, the shifted targets are $0,d,0$. Positive denominators turn strict correctness into
$$
\begin{array}{lll}
 N_Y=ap+Lbq,& D_Y=a+Lb,&N_Y>-\varepsilon D_Y,\\
 N_Z=ap+Lcs,& D_Z=a+Lc,&N_Z>(d-\varepsilon)D_Z,\\
 N_M=ap+bq+Kcs,&D_M=a+b+Kc,&N_M<\varepsilon D_M.
\end{array}
$$
The identity $N_M-L^{-1}N_Y-(K/L)N_Z=0$ cancels $p,q,s$ separately. Combining the last column with this identity gives $0<\varepsilon D_M+(\varepsilon/L)D_Y -(K/L)(d-\varepsilon)D_Z$. Expanding the three denominators gives
$$
 0<\left(2\varepsilon-\frac{Kd}{L}\right)a
   +2\varepsilon b-K(d-2\varepsilon)c.
$$
Since $a\ge0$ and the gap condition makes its coefficient nonpositive, (GD) follows. All inequalities remain strict and use only length $n$.
\end{proof}

\begin{lemma}[Approximate identity rank]
\label{lem:approx-identity}
If $C\in\mathbb{R}^{t\times t}$ has $C_{ii}=1$ and $|C_{ij}|\le1/\lambda$ for $i\ne j$, then
$$
  \operatorname{rank}(C)\ge
  \frac{t}{1+(t-1)/\lambda^2}.
$$
\end{lemma}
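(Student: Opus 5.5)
This is the classical Alon-type approximate-rank bound, and the paper stresses that $C$ need not be symmetric, so I would avoid assuming $C$ is positive semidefinite or diagonalizable. The plan is to compare the trace of $C$ with a Frobenius-type norm and use the Cauchy--Schwarz inequality on singular values. Write $k=\operatorname{rank}(C)$ and let $\sigma_1,\ldots,\sigma_k$ be its nonzero singular values. The key intermediate inequality is
$$
 |\operatorname{tr}(C)|\le\sum_{l=1}^k\sigma_l\le\sqrt{k}\,\|C\|_F .
$$
The first step is von Neumann's trace inequality, or simply $|\operatorname{tr}(C)|=|\operatorname{tr}(U\Sigma V^\top)|=|\sum_l\sigma_l\ip{v_l}{u_l}|\le\sum_l\sigma_l$. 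The second step is Cauchy--Schwarz over the $k$ nonzero singular values.

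Next I would evaluate both sides. The trace is $\operatorname{tr}(C)=t$ because every diagonal entry is $1$. For the Frobenius norm, the $t$ diagonal entries contribute $t$, and the $t(t-1)$ off-diagonal entries contribute at most $t(t-1)/\lambda^2$. This gives
$$
 \|C\|_F^2\le t+\frac{t(t-1)}{\lambda^2}.
$$
Squaring the key inequality then yields $t^2\le k\bigl(t+t(t-1)/\lambda^2\bigr)$, and dividing by $t\bigl(1+(t-1)/\lambda^2\bigr)$ gives exactly the stated bound.

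The only point needing care is the trace-versus-nuclear-norm step for a nonsymmetric matrix. With the SVD form $C=\sum_l\sigma_lu_lv_l^\top$, where $u_l,v_l$ are unit vectors, we have $\operatorname{tr}(C)=\sum_l\sigma_l\,v_l^\top u_l$. Each $|v_l^\top u_l|\le1$, which gives the first inequality directly.

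An alternative route, if we want to avoid singular values entirely, is to apply the bound to the symmetric PSD matrix $C^\top C$ or to use $\operatorname{tr}(C)^2\le k\,\operatorname{tr}(C^\top C)$ via projection onto the column space. The SVD argument above is already complete, so I expect no real obstacle. The case $\lambda=\infty$, where $C$ has no off-diagonal mass, is covered by the same computation.
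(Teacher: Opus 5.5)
Your proof is correct and is essentially the paper's argument. Both bound $t=\operatorname{tr}C$ by the nuclear norm, apply Cauchy--Schwarz over the at most $\operatorname{rank}(C)$ nonzero singular values to get $\sqrt{\operatorname{rank}(C)}\,\|C\|_F$, and square this against $\|C\|_F^2\le t+t(t-1)/\lambda^2$, with your explicit SVD step $|\operatorname{tr}C|\le\sum_l\sigma_l$ playing the role of the paper's nuclear--Frobenius duality, so neither version needs symmetry.
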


\begin{proof}
Here $\operatorname{tr}C=t$ and $\|C\|_F^2\le t+t(t-1)/\lambda^2$. Symmetry is unnecessary: nuclear--Frobenius duality gives $t=|\langle C,I\rangle|\le\|C\|_* \le\sqrt{\operatorname{rank}(C)}\|C\|_F$. Squaring proves the claim.
\end{proof}

\begin{theorem}[Amplified fixed-length rank bound]
\label{thm:amplified-rank}
Fix an integer $1\le h\le m$. Let $\varnothing\ne\cG\subseteq\binom{[m]}h$ have directed distance at least $\Delta$, meaning $|S\setminus U|\ge\Delta$ for every distinct $S,U\in\cG$. Thus every $x_S$ has Hamming weight $h$. Under the hypotheses of Lemma~\ref{lem:gap-domination}, choose an integer $g\le\Delta$ such that $g>2\varepsilon(n-1)/(n-2)$, and set
$$
 \mu_g=\frac{(n-2)(g-2\varepsilon)}{2\varepsilon}>1,
 \qquad
 \Lambda=\mu_g^{\lfloor\Delta/g\rfloor}.
$$
Then
\begin{equation}
 r\ge
 \frac{|\cG|}{1+(|\cG|-1)/\Lambda^2}
 \ge\frac12\min\{|\cG|,\Lambda^2\}.
 \label{eq:amplified-rank}
\end{equation}
\end{theorem}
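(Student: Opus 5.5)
The plan is to build a $|\cG|\times|\cG|$ submatrix of the kernel matrix $A$ that, after rescaling its rows, has unit diagonal and off-diagonal entries of absolute value at most $1/\Lambda$, and then invoke Lemma~\ref{lem:approx-identity}.

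\textbf{Choice of keys.} Lemma~\ref{lem:gap-domination} says the key with the \emph{smaller} overlap receives the \emph{larger} kernel weight. So the diagonal must carry the minimum overlap. I pair each query $x_S$ ($S\in\cG$) with the complement key $x_{\bar U}$, where $\bar U=[m]\setminus U$ and $U\in\cG$. Then $\ip{x_S}{x_{\bar U}}=|S\setminus U|$. This equals $0$ when $U=S$, and it is at least $\Delta$ when $U\ne S$. Define
$$B_{S,U}=\alpha(x_S,x_{\bar U}).$$
This is a submatrix of $A$, so $\operatorname{rank}(B)\le r$. The goal is to show $B_{S,S}>\Lambda\,B_{S,U}\ge0$ for all distinct $S,U\in\cG$.

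\textbf{Amplification chain.} Fix distinct $S,U\in\cG$. Let $D=|S\setminus U|\ge\Delta$ and $k=\lfloor\Delta/g\rfloor\ge1$.
\begin{itemize}
\item Choose nested sets $\varnothing=T_0\subset T_1\subset\cdots\subset T_{k-1}\subseteq S\setminus U$ with $|T_j|=jg$. This is possible because $(k-1)g<\Delta\le D$.
\item Set $z_j=x_{\bar S\cup T_j}$ for $j<k$, and $z_k=x_{\bar U}$.
\item Then $\ip{x_S}{z_j}=jg$ for $j<k$, and $\ip{x_S}{z_k}=D$. All these overlaps are at most $h=\ip{x_S}{x_S}$, as Lemma~\ref{lem:gap-domination} requires of the larger overlap $u$.
\item Each consecutive gap is at least $g$. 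The last gap is $D-(k-1)g\ge g$.
\end{itemize}
Since $g>2\varepsilon(n-1)/(n-2)$, each gap $d\ge g$ satisfies the hypothesis of Lemma~\ref{lem:gap-domination}. Because $(n-2)(d-2\varepsilon)/(2\varepsilon)$ is increasing in $d$, each step gives
$$\alpha(x_S,z_{j})>\mu_g\,\alpha(x_S,z_{j+1}).$$
Chaining the $k$ steps gives $B_{S,S}=\alpha(x_S,z_0)>\mu_g^{k}\,\alpha(x_S,z_k)=\Lambda B_{S,U}$. Nonnegativity of $\alpha$ and strictness at the first step make $B_{S,S}>0$ even if some later weights vanish.

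\textbf{Conclusion.} Let $C=\operatorname{diag}(B_{S,S})^{-1}B$. Row scaling by positive numbers preserves rank, so $\operatorname{rank}(C)\le r$. The matrix $C$ has unit diagonal and $0\le C_{S,U}<1/\Lambda$ off the diagonal; it need not be symmetric, which is exactly why the nonsymmetric Lemma~\ref{lem:approx-identity} is used. With $\lambda=\Lambda$ and $t=|\cG|$, that lemma gives the first inequality in \eqref{eq:amplified-rank}. For the second, note $t/(1+(t-1)/\Lambda^2)\ge t/(1+t/\Lambda^2)$, and this is at least $\tfrac12\min\{t,\Lambda^2\}$ by considering whether $t\le\Lambda^2$ or $t>\Lambda^2$.

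\textbf{Main obstacle.} The delicate step is the chain construction. The intermediate keys must have overlaps that are exact multiples of $g$ with the fixed query $x_S$, and the chain must end exactly at $x_{\bar U}$. Nesting the sets $T_j$ inside $S\setminus U$ and absorbing the remainder $D-kg\ge0$ into the last gap handles both requirements.
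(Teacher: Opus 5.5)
Your proposal is correct and follows essentially the paper's own proof. It uses the same matrix $B_{S,U}=\alpha(x_S,x_{\bar U})$ and a chain of gap-domination steps; you put the remainder in the last gap and use exactly $\lfloor\Delta/g\rfloor$ steps, whereas the paper puts it in the first gap and uses $\lfloor D/g\rfloor$ steps. You then row-normalize and apply Lemma~\ref{lem:approx-identity} to the nonsymmetric matrix, as the paper does. The only omission is the case $|\cG|=1$: there is then no off-diagonal entry to force $B_{S,S}>0$, and the bound reduces to $r\ge1$, which holds because $r=0$ would make every denominator zero.
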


\begin{proof}
If $|\cG|=1$, the first bound reduces to $r\ge1$. Indeed, $r=0$ makes every kernel weight and hence every denominator zero. Hence assume $|\cG|\ge2$ and set $B_{S,U}=\alpha(x_S,x_{U^c})$ for $S,U\in\cG$. We first show $B_{S,S}>\Lambda B_{S,U}$ whenever $S\ne U$.

Fix distinct $S,U\in\cG$. In row $S$ of $B$, take the query $x=x_S$ and the two key tokens $y_0=x_{S^c}$ and $y_k=x_{U^c}$. Their overlaps with $x$ are $\ip{x}{y_0}=0$ and $\ip{x}{y_k}=|S\setminus U|=:D\ge\Delta\ge g$, while $D\le|S|=h=\ip{x}{x}$. To amplify this comparison, connect the two tokens through intermediate overlap levels. Write $D=kg+r_0$, where $k=\lfloor D/g\rfloor$ and $0\le r_0<g$, and set $\ell_0=0$ and $\ell_i=ig+r_0$ for $1\le i\le k$. For $1\le i<k$, choose $y_i$ to be the indicator of any $\ell_i$ coordinates in $\operatorname{supp}(x)$. Such a choice exists because $\ell_i<D\le\|x\|_1$, and it satisfies $\ip{x}{y_i}=\ell_i$. The first overlap gap is $g+r_0$ and every later gap is $g$. Thus the remainder $r_0$ is absorbed into the first step and no gap is smaller than $g$. Lemma~\ref{lem:gap-domination} therefore gives $\alpha(x,y_{i-1})>\mu_g\alpha(x,y_i)$ at each of the $k$ steps. Multiplying these inequalities and using $k=\lfloor D/g\rfloor\ge\lfloor\Delta/g\rfloor$ gives $B_{S,S}>\mu_g^{\lfloor D/g\rfloor}B_{S,U} \ge\Lambda B_{S,U}$. Because every row has an off-diagonal entry and the kernel is nonnegative, the strict inequality also makes every $B_{S,S}$ positive. Thus $R=\operatorname{diag}(B_{S,S})$ is invertible. For $C=R^{-1}B$, we have $C_{S,S}=1$ and $0\le C_{S,U}<1/\Lambda$ off diagonal. Since $B$ is a submatrix of the possibly asymmetric kernel matrix, $\operatorname{rank}(C)=\operatorname{rank}(B)\le r$. Lemma~\ref{lem:approx-identity} proves the first bound in \eqref{eq:amplified-rank}. For the second, put $t=|\cG|$. If $t\le\Lambda^2$, the denominator is at most $2$. If $t>\Lambda^2$, it is at most $2t/\Lambda^2$. These give $t/2$ and $\Lambda^2/2$, respectively.
\end{proof}

\subsection{The phase transition}

\begin{theorem}[Three-token phase transition]
\label{thm:three-token}
For every $m\ge1$,
$$
 r^\star(m,1)=r^\star(m,2)=1.
$$
For every $m\ge168$, let $M=2\lfloor m/2\rfloor$. Every model correct on all sequences of length exactly three with error strictly below $1/2$ satisfies
\begin{equation}
 r\ge2^{4\lfloor M/40\rfloor-1}
              \ge2^{m/10-6}.
 \label{eq:three-token-lower}
\end{equation}
Together with Theorem~\ref{thm:exact-feature}, this gives $r^\star(m,3)=2^{\Theta(m)}$.
\end{theorem}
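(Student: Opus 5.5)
The statement has two halves, and I would prove them separately: an explicit rank-one construction for lengths at most two, and an instantiation of Theorem~\ref{thm:amplified-rank} at $n=3$, $\varepsilon=1/2$ for the lower bound.

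\textbf{Upper bound for $n\le2$.} Take $r=1$ with $\phi_Q\equiv\phi_K\equiv1$. Then $\alpha\equiv1$, every normalizer is positive, and $a_i(X)$ is the plain average of the values. Choose $v(z)=z$ and the affine readout $\widehat t_i=2\ip{x_i}{a_i}-\ip{x_i}{x_i}$, which has the form \eqref{eq:affine-readout} with $\beta(x)=-\ip{x}{x}$ and $w(x)=2x$.
\begin{itemize}
\item At length one, $a_i=x_i$, so $\widehat t_i=\ip{x_i}{x_i}=t_i$.
\item At length two with tokens $x,y$, $a=(x+y)/2$, so $\widehat t=\ip{x}{y}$.
\end{itemize}
At length two this equals $t=\min\{\ip{x}{x},\ip{x}{y}\}$, because $\ip{x}{y}\le\ip{x}{x}$ on Boolean vectors. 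So recovery is exact at both lengths and $r^\star(m,1)=r^\star(m,2)=1$, since $r=0$ is impossible.

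\textbf{Lower bound.} Apply Theorem~\ref{thm:amplified-rank} with $n=3$ and $\varepsilon=1/2$, so that $\mu_g=g-1$ and any $g\ge2$ is admissible. I would take $g=5$, giving $\mu_5=4$ and $\Lambda=4^{\lfloor\Delta/5\rfloor}$. Put $k=\lfloor M/40\rfloor$ and work inside $[M]\subseteq[m]$ with weight $h=M/2$ (the spare coordinate, if $m$ is odd, stays zero). It then suffices to find $\cG\subseteq\binom{[M]}{M/2}$ with the following two properties:
\begin{itemize}
\item directed distance $\Delta\ge5k$, which gives $\Lambda^2\ge2^{4k}$;
\item size $|\cG|\ge2^{4k}$.
\end{itemize}
Then \eqref{eq:amplified-rank} yields $r\ge\frac12\cdot2^{4k}=2^{4k-1}$. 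The inequality $2^{4\lfloor M/40\rfloor-1}\ge2^{m/10-6}$ is floor arithmetic: $M\ge m-1$, so $4\lfloor M/40\rfloor\ge M/10-4\ge m/10-5$.

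\textbf{The code (main obstacle).} For equal-weight sets, $|S\setminus U|=\frac12|S\triangle U|$, so I need a constant-weight code of Hamming distance at least $10k\approx M/4$. I would build it by a greedy Gilbert--Varshamov argument. Each chosen set excludes at most
$$V=\sum_{j<5k}\binom{M/2}{j}^2\le 5k\binom{M/2}{5k}^2$$
sets of weight $M/2$. Hence
$$|\cG|\ge\binom{M}{M/2}\Big/V.$$
Estimate the numerator by $\binom{M}{M/2}\ge2^M/\sqrt{2M}$. For the denominator, $5k\le M/8$, so $\binom{M/2}{5k}\le2^{(M/2)H(1/4)}$, using that the binary entropy $H$ is monotone below $1/2$. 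This gives $V\le(M/8)\,2^{MH(1/4)}$ with $H(1/4)\approx0.811$. Therefore $|\cG|\gtrsim 2^{0.189M}/\mathrm{poly}(M)$, which should exceed $2^{M/10}\ge2^{4k}$.

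The step requiring care is making this final comparison fully explicit, so that the polynomial losses are absorbed already at the stated threshold $m\ge168$. If the entropy constants turn out to be too tight there, I would first check small $M$ numerically.

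The claim $r^\star(m,3)=2^{\Theta(m)}$ then follows from this lower bound combined with Theorem~\ref{thm:exact-feature}, which gives $r^\star(m,3)\le2^m$.
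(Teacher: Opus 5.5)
Your proposal is correct and follows the paper's proof essentially step for step: the same rank-one averaging construction for lengths at most two, and the same greedy middle-layer code fed into Theorem~\ref{thm:amplified-rank} with $n=3$, $\varepsilon=1/2$, $g=5$. The paper takes $\Delta=\lfloor M/8\rfloor$, so that $\lfloor\Delta/5\rfloor=\lfloor M/40\rfloor$, and uses the cruder estimates $\binom{M}{M/2}\ge2^M/(M+1)$ and at most $M+1$ excluded-distance terms; this is what makes $M\ge168$ the threshold. Your sharper estimates close the final comparison with room to spare at $M=168$ (about $14.9$ bits of slack against about $8.6$ bits of polynomial loss). One slip is harmless: the theorem needs $g>2\varepsilon(n-1)/(n-2)=2$, i.e.\ $g\ge3$, because $g=2$ gives $\mu_2=1$.
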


\begin{proof}
Dimension zero makes every denominator zero, so $r^\star\ge1$. For lengths at most two, take $\alpha(x,z)=1$, $v(z)=z$, and read out $2\ip{x_i}{a_i(X)}-\|x_i\|_1$. This returns self-overlap at length one and the sole cross-overlap at length two, which cannot exceed self-overlap.

For the lower bound we first construct the code required by Theorem~\ref{thm:amplified-rank}. Work in the even active dimension $M$, let $\mathcal L=\binom{[M]}{M/2}$ be the middle layer, and put $\Delta=\lfloor M/8\rfloor$. Fix $S\in\mathcal L$. A word $U\in\mathcal L$ with $|S\setminus U|=a$ is obtained by removing $a$ elements of $S$ and adding $a$ from $S^c$, so there are exactly $\binom{M/2}{a}^2$ choices. Equal word sizes give $|S\setminus U|=|U\setminus S|$. Greedily select a word and delete every word at distance below $\Delta$. The selected family $\cG$ has the required directed distance, and its size $T=|\cG|$ satisfies
$$
 T\ge\frac{\binom{M}{M/2}}
 {\sum_{a=0}^{\Delta-1}\binom{M/2}{a}^2}.
$$
Indeed, the numerator counts all words and the denominator bounds how many are deleted at each selection. The central coefficient is the largest of the $M+1$ binomial coefficients whose sum is $2^M$, so $\binom{M}{M/2}\ge2^M/(M+1)$. Let $H_2(p)=-p\log_2p-(1-p)\log_2(1-p)$. The standard bound $\binom q{pq}\le2^{qH_2(p)}$ and monotonicity of $H_2$ on $[0,1/2]$, applied with $q=M/2$ and $p=2a/M<1/4$, give $\binom{M/2}{a}^2\le2^{M H_2(1/4)}$. The denominator has at most $M+1$ terms. Therefore
$$
 \log_2T\ge(1-H_2(1/4))M-2\log_2(M+1)\ge M/10
$$
for $M\ge168$.

Now apply Theorem~\ref{thm:amplified-rank} to $\cG$ with $h=M/2$, $n=3$, $\varepsilon=1/2$, and $g=5$. Then $\mu_g=4$ and $\Lambda^2=2^{4\lfloor M/40\rfloor}\le2^{M/10}\le T$. Hence the minimum in \eqref{eq:amplified-rank} is $\Lambda^2$, giving $r\ge\Lambda^2/2=2^{4\lfloor M/40\rfloor-1}$, the first bound in \eqref{eq:three-token-lower}. When $m$ is odd, fixing the final coordinate to zero leaves the active dimension $M=m-1$ and preserves every overlap. Finally, $M\ge m-1$ and $\lfloor M/40\rfloor\ge M/40-1$ give $4\lfloor M/40\rfloor-1\ge m/10-6$.
\end{proof}

\begin{corollary}[Near-full rank for growing fixed context]
\label{cor:growing-context}
If an integer-valued exact context length $n=n(m)$ tends to infinity and the model succeeds on every sequence of exactly that length with error below $1/2$, then
$$
 r\ge2^{m-o(m)}.
$$
\end{corollary}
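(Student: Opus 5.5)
We apply Theorem~\ref{thm:amplified-rank} with $\varepsilon=1/2$, a fixed exact length $n=n(m)\to\infty$, and a code of tiny relative distance. As $n$ grows, the per-step domination factor $\mu_g=(n-2)(g-1)$ blows up, so the step size can drop to $g=2$; then $\mu_2=n-2\to\infty$. The gap condition of Lemma~\ref{lem:gap-domination} needs $g>(n-1)/(n-2)$, which $g=2$ satisfies for all $n\ge4$. Hence $\Lambda=(n-2)^{\lfloor\Delta/2\rfloor}$. Large $n$ lets a small distance $\Delta=\delta m$ still give $\Lambda^2\ge 2^{m}$, provided $\delta\log_2(n-2)\gtrsim 1$. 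A small distance in turn allows a code of size $2^{m-o(m)}$. The bound $r\ge\frac12\min\{|\cG|,\Lambda^2\}$ then gives $r\ge 2^{m-o(m)}$.

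\textbf{Step 1 (distance).} Choose $\delta=\delta(m)\to0$ slowly, say $\delta=\max\{3/\log_2(n-2),\,1/\log_2 m\}$. This makes $\delta\to0$ and $\delta\log_2(n-2)\ge3$. Set $\Delta=\lceil\delta M/2\rceil$, with $M=2\lfloor m/2\rfloor$ as in the proof of Theorem~\ref{thm:three-token}. Then $\Lambda^2=(n-2)^{2\lfloor\Delta/2\rfloor}\ge 2^{(\Delta-2)\log_2(n-2)}$. For large $m$ this is at least $2^{M}$ whenever $\Delta\log_2(n-2)\ge M+2\log_2(n-2)$.

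If $n$ is astronomically large, $\log_2(n-2)$ may exceed $M$ and $\Delta$ would round to a small value. That case is harmless: take $\Delta=2$, so $\lfloor\Delta/2\rfloor=1$ and $\Lambda^2=(n-2)^2\ge 2^M$ once $n-2\ge 2^{M/2}$. In general, take $\Delta=\max\{2,\lceil 2(M+2)/\log_2(n-2)\rceil+2\}$. This is $o(M)$ and still gives $\Lambda^2\ge2^M$.

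\textbf{Step 2 (code size).} Run the greedy Gilbert--Varshamov argument from the proof of Theorem~\ref{thm:three-token} in the middle layer $\binom{[M]}{M/2}$:
\[
T\ge\frac{\binom{M}{M/2}}{\sum_{a<\Delta}\binom{M/2}{a}^2}\ge\frac{2^M}{(M+1)^2\,2^{M H_2(2\Delta/M)}}.
\]
Since $\Delta/M\to0$, we have $H_2(2\Delta/M)\to0$, so $T\ge2^{M-o(M)}$. The admissibility conditions $g=2\le\Delta$ and $\Delta\le h=M/2$ hold for large $m$.

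\textbf{Step 3 (conclude).} Theorem~\ref{thm:amplified-rank} gives $r\ge\frac12\min\{T,\Lambda^2\}\ge\frac12\,2^{M-o(M)}=2^{m-o(m)}$, using $M\ge m-1$. Odd $m$ is handled by zeroing the last coordinate, as before.

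The main obstacle is balancing Step 1 against Step 2 uniformly over all growth rates of $n$. The distance must satisfy $\Delta=o(M)$ so that the entropy loss is $o(M)$, yet $\Delta\log(n-2)$ must reach roughly $M$. Defining $\Delta$ explicitly through $\log_2(n-2)$, with the floor at $2$, should handle every regime. Both requirements are compatible exactly because $n\to\infty$. With $n$ bounded one only gets the constant-fraction exponent of Theorem~\ref{thm:three-token}.
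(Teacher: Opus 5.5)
Your proposal is correct and takes essentially the same route as the paper. Both arguments set $\varepsilon=1/2$ and $g=2$, so that $\mu_2=n-2$, choose $\Delta\asymp M/\log_2(n-2)=o(M)$ to get $\Lambda^2\ge2^M$, build a middle-layer code by greedy packing with entropy loss $MH_2(2\Delta/M)=o(M)$, and conclude from Theorem~\ref{thm:amplified-rank} using $T\le2^M\le\Lambda^2$; the paper simply takes the even value $\Delta=2\lceil M/(2\log_2(n-2))\rceil$, which gives $\Lambda^2=(n-2)^{\Delta}\ge2^M$ directly and makes your separate handling of very large $n$ unnecessary (your final general choice of $\Delta$ also works).
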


\begin{proof}
For all sufficiently large $m$, put $M=2\lfloor m/2\rfloor$, $s=\log_2(n-2)$, $\Delta=2\lceil M/(2s)\rceil$, and $\rho=2\Delta/M$. Then $2\le\Delta\le M/4$. Using $g=2$ gives $\mu_2=n-2$ and $\log_2\Lambda^2\ge M$. The same greedy packing bound gives a code of size
$$
 T\ge\frac{2^{M-MH_2(\rho)}}{(M+1)\Delta}.
$$
Because $T\le2^M\le\Lambda^2$, Theorem~\ref{thm:amplified-rank} yields $r\ge2^{M-MH_2(\rho)-\log_2((M+1)\Delta)-1}$. As $n(m)\to\infty$, $\rho\le2/s+4/M=o(1)$. Since $M\ge m-1$, the exponent is $m-o(m)$.
\end{proof}

\begin{theorem}[Position-dependent and causal final-query extension]
\label{thm:position-extension}
Fix one query position and $n-1$ visible source positions. Allow arbitrary position-dependent tokenwise key and value maps, a position-dependent query map and affine readout, and nonnegative contributions
$$
 A_j(x,z)=\ip{\phi_Q(x)}{\phi_{K,j}(z)}\ge0
$$
in a shared $r$-dimensional feature space. With positive denominators and no other cross-token channel, Theorem~\ref{thm:amplified-rank}, Theorem~\ref{thm:three-token}, and Corollary~\ref{cor:growing-context} remain valid. If source position $j$ instead uses an independent feature space of dimension $r_j$, the same conclusions hold with $r$ replaced by the total dimension $\sum_jr_j$. The result applies causally when the database occupies a visible prefix and the queried output is at the final position.
\end{theorem}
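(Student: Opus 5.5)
The plan is to rerun the argument of Lemma~\ref{lem:gap-domination} with position-indexed quantities and then observe that the remaining steps use only that lemma's conclusion. Fix the query position, which we call position $0$ and assign token $x$, and write $J$ for the $n-1$ source positions. For a query $x$ and source position $j$, put $\alpha_j(z)=A_j(x,z)$. Scalarization as in \eqref{eq:scalarization} then gives the value $g_{x,j}(z)=\beta(x)+w(x)^\top v_j(z)$, where the value map and readout may depend on position. The self term is whatever contribution the query position makes; call it $a\ge0$ with value $p$. If the final query does not attend to itself, then $a=0$ and only the source positions appear, which is the causal case. Thus the output is $\bigl(ap+\sum_j\alpha_j(x_j)g_{x,j}(x_j)\bigr)/\bigl(a+\sum_j\alpha_j(x_j)\bigr)$.

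\textbf{Step 1 (averaged domination).} The goal is an averaged form of (GD). Define $b=\frac1L\sum_j\alpha_j(y)$ and $c=\frac1L\sum_j\alpha_j(z)$ with $L=n-1$. Instead of the single input $M=(x,y,z^K)$, use the $L$ mixed inputs $M_k$ that place $y$ at source position $k$ and $z$ at every other source position, and average them. Set $N_M=\frac1L\sum_kN_{M_k}$, and define $D_M$ the same way. Every $M_k$ satisfies $N_{M_k}<\varepsilon D_{M_k}$, so the average does too. One then checks that
\[
N_M=ap+\tfrac1L\textstyle\sum_j\alpha_j(y)q_j+\tfrac{K}{L}\sum_j\alpha_j(z)s_j ,
\]
where $q_j$ and $s_j$ are the shifted values. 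This gives exactly the identity $N_M-L^{-1}N_Y-(K/L)N_Z=0$, and the denominators become $a+Lb$, $a+Lc$, and $a+b+Kc$. The same algebra then yields $\bar\alpha(x,y)>\mu\,\bar\alpha(x,z)$ for the averaged kernel $\bar\alpha(x,z)=\frac1L\sum_jA_j(x,z)=\ip{\phi_Q(x)}{\bar\phi_K(z)}$, with $\bar\phi_K=\frac1L\sum_j\phi_{K,j}$. This averaged kernel is nonnegative and factors through $\mathbb R^r$.

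\textbf{Step 2 (rank).} Apply the proofs of Theorem~\ref{thm:amplified-rank}, Theorem~\ref{thm:three-token}, and Corollary~\ref{cor:growing-context} verbatim with $\bar\alpha$ in place of $\alpha$. They use only the conclusion of Lemma~\ref{lem:gap-domination} for this kernel, nonnegativity, and $\operatorname{rank}\le r$. The upper bounds also persist, since position-independent maps are a special case.

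\textbf{Step 3 (independent feature spaces).} If position $j$ has its own feature space of dimension $r_j$, stack the features. Let $\phi_Q=(\phi_{Q,j})_j$ and embed each $\phi_{K,j}$ in block $j$, padding with zeros elsewhere. This realizes the same contributions in a shared space of dimension $\sum_jr_j$, so the previous case applies.

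The main obstacle is Step 1. One must verify that averaging over the position of $y$ makes the linear identity cancel $p,q_j,s_j$ separately, even though the value weights are now position-dependent, and that strictness survives the averaging. If the self-position contributes no kernel term, the argument still goes through because the coefficient of $a$ was only needed to be nonpositive.
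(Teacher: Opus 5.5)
Your proposal is correct and matches the paper's proof. The paper likewise sums the $L$ mixed inputs that place $y$ at a single source position (you average, which only rescales), uses $\sum_\ell N_{M_\ell}=N_Y+(L-1)N_Z$ to derive domination for the effective kernel $\sum_jA_j$ of rank at most $r$, reruns the chain and rank argument, and concatenates independent position spaces to charge $\sum_jr_j$. One small inaccuracy is that a causal final query normally does attend to itself, so the causal case is not the $a=0$ case but the case where the query sees the whole database prefix. Since your argument allows any $a\ge0$, nothing changes.
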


\begin{proof}
Write $L=n-1$, $K=L-1$, and subtract $t=\ip{x}{y}$ from every scalarized value. At source position $j$, put $b_j=A_j(x,y)$, $c_j=A_j(x,z)$, $q_j=g_{x,j}(y)-t$, and $s_j=g_{x,j}(z)-t$. Let $a$ denote the self-kernel term. Let $Y$ place $y$ everywhere, $Z$ place $z$ everywhere, and $M_\ell$ place $y$ only at position $\ell$. Direct expansion gives $\sum_{\ell=1}^L N_{M_\ell}=N_Y+(L-1)N_Z$. Summing the strict mixed-input upper inequalities and using the lower inequalities for $Y,Z$ yields
$$
 0<(2\varepsilon L-Kd)a+2\varepsilon\sum_jb_j
       -K(d-2\varepsilon)\sum_jc_j.
$$
The gap condition makes the self coefficient nonpositive. Hence the effective kernel $A^{\mathrm{eff}}(x,z)=\sum_jA_j(x,z) =\ip{\phi_Q(x)}{\sum_j\phi_{K,j}(z)}$ obeys the same domination lemma and has rank at most $r$. The preceding chain and rank proof applies. Concatenating independent position spaces charges $\sum_jr_j$. A final causal query sees the entire database prefix, whereas no claim is made about unseen future tokens.
\end{proof}

\paragraph{Why the scope is necessary.} Nonnegativity converts one-sided domination into absolute off-diagonal control. Signed kernels can cancel and need not obey this argument. Multiple heads can specialize and cancel through their output projection. An unrestricted nonlinear exact-real decoder can encode a finite token histogram in one real coordinate. We therefore make only the finite-precision communication statement below for broader multilayer sketch models, rather than extending the one-head rank claim beyond its proof.

\section{Finite-precision information for heads and depth}
\label{sec:multilevel}

Choose integers $M,s\ge1$, total input dimension $m=2+M+s$, and a constant-weight code $\cC\subseteq\binom{[M]}h$ satisfying $1\le h\le M$ and $|U\setminus S|\ge s+1$ for distinct codewords. For $S\in\cC$, define
$$
  q_S=(0,1,x_{S^c},\mathbf1_s),
  \qquad
  d_{U,k}=(1,0,x_U,u_k),
$$
where $u_k\in\bits^s$ has Hamming weight $k\in\{0,\ldots,s\}$. Fix any $q$-word subcode $\cQ\subseteq\cC$. For $\kappa\in\{0,\ldots,s\}^{\cQ}$, use the fixed query block $(q_S)_{S\in\cQ}$ followed by $(d_{S,\kappa_S})_{S\in\cQ}$.

The overlap identity
$$
  \ip{q_S}{d_{U,k}}=|U\setminus S|+k
$$
shows that the matching token has overlap $k$, while every nonmatching database token has overlap at least $s+1$. Query--query overlaps are also at least $s+1$, including self tokens. Hence the exact query target is $t(q_S)=\kappa_S$ with no tie or padding shortcut.

For the information results, consider a deterministic sketch model. Token $i$ starts at $h_i^{(0)}=E(x_i,i)$. At layer $\ell$, the model computes the finite-alphabet global message $\sigma_\ell=F_\ell(h_1^{(\ell-1)},\ldots,h_N^{(\ell-1)})\in\Sigma_\ell$ and then updates each token locally as $h_i^{(\ell)}=U_\ell(h_i^{(\ell-1)},\sigma_\ell,i)$. The final decoder also reads each token separately. Hence the complete transcript $(\sigma_1,\ldots,\sigma_L)$ contains all information communicated between different tokens.

\begin{theorem}[Transcript injectivity]
\label{thm:transcript}
Consider any deterministic sketch-based model in which every cross-token channel passes through finite nonempty sketch alphabets $\Sigma_1,\ldots,\Sigma_L$, while query tokens and their positions are fixed across assignments. If $2q\le n$ and it solves every valid input with error below $1/2$, then
$$
  \sum_{\ell=1}^L\log_2|\Sigma_\ell|
  \ge q\log_2(s+1).
$$
\end{theorem}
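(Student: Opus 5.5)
The plan is a pigeonhole argument on the transcript map. First I fix the query block $(q_S)_{S\in\cQ}$ and the positions $1,\ldots,q$ of its tokens. For each assignment $\kappa\in\{0,\ldots,s\}^{\cQ}$ I form the input $X_\kappa$ of length $2q\le n$, which is therefore a valid input. By the overlap identity and the remarks after it, the target at the position of $q_S$ is exactly $\kappa_S$. Matching database tokens have overlap $k\le s$. Nonmatching database tokens and query tokens, including the self token, have overlap at least $s+1$. So there are no ties or padding shortcuts.

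Next I argue that every final query output is a function of its own initial token, its position, and the transcript. This follows by induction on layers. The base case is $h_i^{(0)}=E(x_i,i)$. At layer $\ell$, $h_i^{(\ell)}=U_\ell(h_i^{(\ell-1)},\sigma_\ell,i)$ depends only on $h_i^{(\ell-1)}$, $\sigma_\ell$ and $i$. The decoder then reads token $i$ alone. Since the query tokens $x_i=q_S$ and their positions are identical across all $X_\kappa$, the output $\widehat t_i(X_\kappa)$ is a fixed function $\Psi_i(\sigma_1(X_\kappa),\ldots,\sigma_L(X_\kappa))$ that does not depend on $\kappa$ except through the transcript. The one point to be careful about is that $\sigma_\ell$ may depend on database tokens, which vary. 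That is harmless, because we only need the query side to be determined by the transcript.

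Injectivity comes next. Suppose $\kappa\ne\kappa'$ and differ at some $S$. The true targets at the position of $q_S$ are distinct integers $\kappa_S\ne\kappa'_S$. Error below $1/2$ means nearest-integer rounding of $\widehat t$ recovers them. So the outputs differ, and hence the transcripts differ. Thus $\kappa\mapsto(\sigma_1,\ldots,\sigma_L)$ is injective from a set of size $(s+1)^q$ into $\Sigma_1\times\cdots\times\Sigma_L$. This gives $\prod_\ell|\Sigma_\ell|\ge(s+1)^q$, and taking $\log_2$ yields the claim.

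The main obstacle is only bookkeeping: checking that the construction really is a valid input, with length $2q\le n$ and Boolean tokens of dimension $m$, and that the assumption that queries are fixed across assignments is exactly what makes $\Psi_i$ independent of $\kappa$. I expect no analytic difficulty.
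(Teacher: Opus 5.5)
Your proposal is correct and follows the paper's proof. The transcript map $\kappa\mapsto(\sigma_1(X_\kappa),\ldots,\sigma_L(X_\kappa))$ is injective, because fixed query tokens and positions, together with the layerwise local updates, make each query output a function of the transcript alone, and error below $1/2$ separates the distinct integer targets $\kappa_S$; counting the $(s+1)^q$ assignments against $\prod_\ell|\Sigma_\ell|$ and taking logarithms then gives the bound exactly as in the paper.
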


\begin{proof}
Let $\tau(\kappa)=(\sigma_1(X_\kappa),\ldots,\sigma_L(X_\kappa))$. Equal transcripts start from equal query states because query tokens and positions are fixed. The shared sketch and deterministic local update then preserve equality layer by layer, giving identical final outputs. Strict error below $1/2$ rounds each output to its integer target, so two assignments with the same transcript must be equal. Thus $\tau$ injects $(s+1)^q$ assignments into $\Sigma_1\times\cdots\times\Sigma_L$. Taking logarithms proves the claim.
\end{proof}

In particular, if an $L$-layer, $H$-head linear-attention model communicates only $S_{\ell,h}\in\mathbb{R}^{r\times d_v}$ and $z_{\ell,h}\in\mathbb{R}^r$, with at most $2^p$ values per coordinate, then
\begin{equation}
  LHr(d_v+1)p\ge q\log_2(s+1).
  \label{eq:finite-precision}
\end{equation}
Any additional cross-token channel must be counted. No claim is made for an unrestricted exact-real transcript.

\begin{proposition}[Size of the multilevel family]
\label{prop:multilevel-code}
For every $m\ge32$, set $s=\lfloor m/\log_2m\rfloor$, $M=m-s-2$, and $h=\lfloor M/2\rfloor$. There is a constant-weight code $\cC\subseteq\binom{[M]}h$ satisfying $|U\setminus S|\ge s+1$ for all distinct $S,U\in\cC$, with
$$
 |\cC|\ge
 \frac{2^M}{(M+1)(s+1)(eM/s)^{2s}}
 =2^{m-O(m\log\log m/\log m)}.
$$
Thus the hard family may use $q=\min\{\lfloor n/2\rfloor,|\cC|\}$ query--database pairs. Theorem~\ref{thm:transcript} then requires $\Omega(q\log m)$ transcript bits and, whenever $\lfloor n/2\rfloor\le|\cC|$, $\Omega(n\log m)$ bits.
\end{proposition}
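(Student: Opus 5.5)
The plan is a greedy Gilbert--Varshamov packing inside the layer $\binom{[M]}{h}$, the same as in the proof of Theorem~\ref{thm:three-token}, followed by an asymptotic estimate and a substitution into Theorem~\ref{thm:transcript}.

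\textbf{Packing step.} For a fixed $S$ with $|S|=h$, a word $U$ with $|U\setminus S|=a$ is formed by removing $a$ elements of $S$ and adding $a$ elements of $[M]\setminus S$. So there are $\binom{h}{a}\binom{M-h}{a}$ such words. Equal weights give $|U\setminus S|=|S\setminus U|$, so the distance condition is symmetric.

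We greedily select a word and delete every word at distance $a\le s$ from it. This produces a code with
\[
|\cC|\ge \binom{M}{h}\Big/\sum_{a=0}^{s}\binom{h}{a}\binom{M-h}{a}.
\]
For the numerator, $h=\lfloor M/2\rfloor$ is a central index, so $\binom{M}{h}\ge 2^M/(M+1)$. For the denominator there are $s+1$ terms, and each is at most $\binom{M}{a}^2\le (eM/a)^{2a}$. We need the bound $(eM/a)^{2a}\le(eM/s)^{2s}$ for $a\le s$. It follows because $a\mapsto (eM/a)^a$ is increasing while $a\le M$, and here $s\le M$ since $m\ge32$. This gives the displayed bound exactly. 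We also need $M\ge1$ and $1\le h\le M$, which a quick check for $m\ge32$ confirms.

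\textbf{Asymptotics.} Take $\log_2$ of the bound:
\[
M-\log_2(M+1)-\log_2(s+1)-2s\log_2(eM/s).
\]
We have $M=m-s-2$ with $s=\Theta(m/\log m)$. Also $M/s=\Theta(\log m)$, so $\log_2(eM/s)=O(\log\log m)$. Hence $s+2s\log_2(eM/s)=O(m\log\log m/\log m)$, and the logarithmic terms are lower order. This gives $2^{m-O(m\log\log m/\log m)}$.

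\textbf{Consequences.} Set $q=\min\{\lfloor n/2\rfloor,|\cC|\}$ and choose any $q$-word subcode $\cQ$. The construction of Section~\ref{sec:multilevel} then uses length $2q\le n$. Theorem~\ref{thm:transcript} gives at least $q\log_2(s+1)$ transcript bits. Since $s+1\ge m/\log_2 m$, we get $\log_2(s+1)=\Omega(\log m)$. When $\lfloor n/2\rfloor\le|\cC|$ we have $q=\lfloor n/2\rfloor=\Omega(n)$ for $n\ge2$, which gives $\Omega(n\log m)$.

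\textbf{Main obstacle.} The only delicate point is the monotonicity bound on the denominator terms together with the small-$m$ side conditions ($s\ge1$, $M>s$, $h\ge1$ at $m\ge32$). I would verify these directly. For example, at $m=32$ we get $s=6$, $M=24$ and $h=12$, and the conditions only improve for larger $m$.
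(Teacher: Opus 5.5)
Your proposal is correct and follows the paper's proof essentially step for step: greedy packing in $\binom{[M]}{h}$ with ball count $\binom{h}{a}\binom{M-h}{a}$, the central-coefficient bound $2^M/(M+1)$, the per-term estimate $(eM/s)^{2s}$, the side-condition check at $m\ge32$, and the same asymptotic estimate and substitution into Theorem~\ref{thm:transcript}. Your monotonicity argument for $a\mapsto(eM/a)^a$ on $a\le M$ simply makes explicit a step the paper leaves implicit.
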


\begin{proof}
Fix $S\in\binom{[M]}h$. A word at directed distance $a$ is formed by removing $a$ elements of $S$ and adding $a$ from its complement, giving $\binom ha\binom{M-h}{a}$ choices. Greedily selecting a word and deleting all words through distance $s$ therefore removes at most
$$
 \sum_{a=0}^s\binom ha\binom{M-h}{a}
 \le(s+1)(eM/s)^{2s}.
$$
Since the middle layer has size at least $2^M/(M+1)$, this proves the finite bound. For $m\ge32$, $s\le m/5$, $M\ge4m/5-2$, and $h>s+1$, so the packing radius is legal. Moreover, $2s\log_2(eM/s)=O(m\log\log m/\log m)$, while $M=m-O(m/\log m)$. Finally $\log_2(s+1)=\Theta(\log m)$. Each hard instance uses two tokens per codeword, so choose $q=\min\{\lfloor n/2\rfloor,|\cC|\}$ and apply Theorem~\ref{thm:transcript}.
\end{proof}

\section{Experiments}
\label{sec:experiments}

We study the proof's length-three family and learned positive-feature attention on larger held-out families. Each setting is finite and not a feature-rank lower bound. The worst-case guarantees come from the theory.

\subsection{Finite three-token family}

The construction starts with fifteen 24-bit vectors, each containing 12 ones and separated from the others by directed distance at least six. Each ordered pair gives a query $x$ and a key chain whose overlaps with $x$ rise from zero to at least six in jumps $t\to u$ of at least three. Each jump emits $(x,y,y)$, $(x,z,z)$, and $(x,y,z)$ with targets $t,u,t$. Strict correctness forces $\alpha(x,y)>2\alpha(x,z)$, so ratios multiply along the chain as plotted in Figure~\ref{fig:three-token-witness}(c). The 424 jumps give $1{,}272$ inputs.

Using free query/key feature and query--token lookup tables, we train six ranks from five initializations on the complete family. Rank is the only restricted resource, and success means maximum error below $1/2$. The proof excludes $r<8$. Mean maximum error remains above $0.80$ through $r=15$, all five $r=32$ runs solve the family, and a constructed $r=15$ solution attributes learned failure there to optimization rather than a stronger lower bound.

\begin{figure}[ht]
  \centering
  \includegraphics[width=0.94\textwidth]{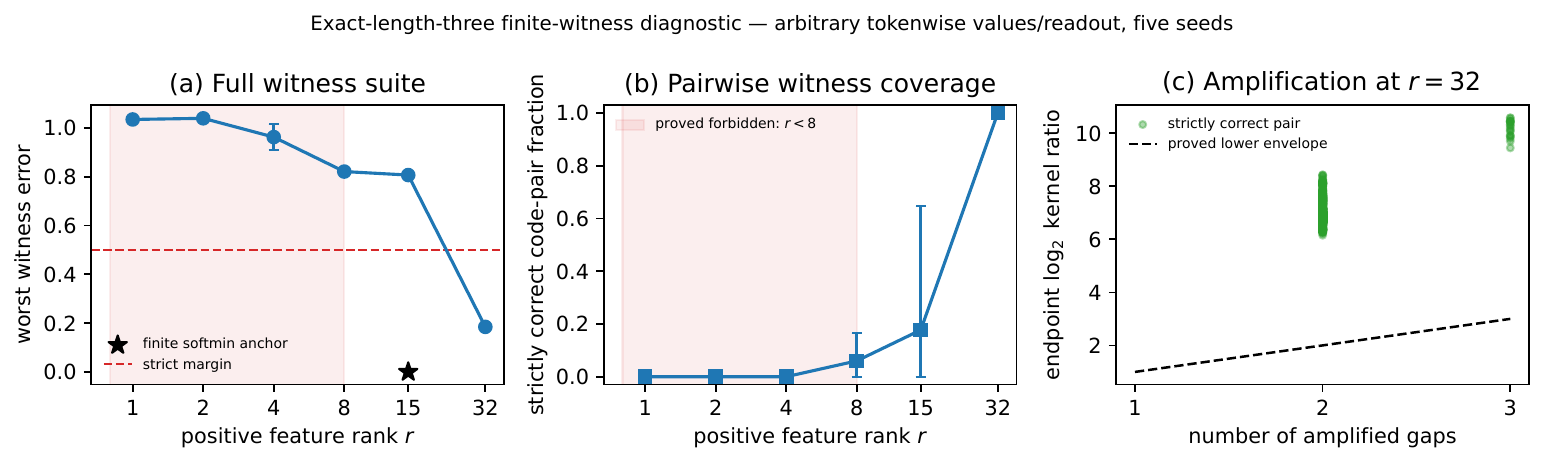}
\caption{Feature-rank sweep: maximum error, ordered-pair coverage, and $r=32$ endpoint ratios against the dashed factor-two-per-step envelope. Shading marks proof-excluded ranks, the star the constructed $r=15$ model, and bars are 95\% Student-$t$ intervals.}
  \label{fig:three-token-witness}
\end{figure}

\subsection{OOD feature-capacity scaling}

At scale $(M,q,n)$, a shuffled length-$n=2q$ input contains $q$ queries and $q$ database slots derived from $M$-bit codewords. An assignment fills slot $i$ with query $i$'s matching token, giving Min-IP target zero, or with safe padding, giving target one without changing the length. At $(M,q,n)=(6,4,8),(8,6,12),(10,8,16)$, train and joint-OOD test pools are disjoint in assignments, codewords, and coordinate permutations, and positions are hidden.

We train five initializations of one-layer, one-head positive-feature models at nine ranks and a dense control. The typed token dimension is $M+2\le12$, so $d_v=16$ spans the full affine linear value/readout class. Exact-sequence accuracy is the fraction of test inputs on which every query has error below $1/2$. Its $90\%$ transition moves from $r=8$ to $16$ to $64$ across the three scales, while learned dense attention is exact in every seed.

\begin{figure}[ht]
  \centering
  \includegraphics[width=0.86\textwidth]{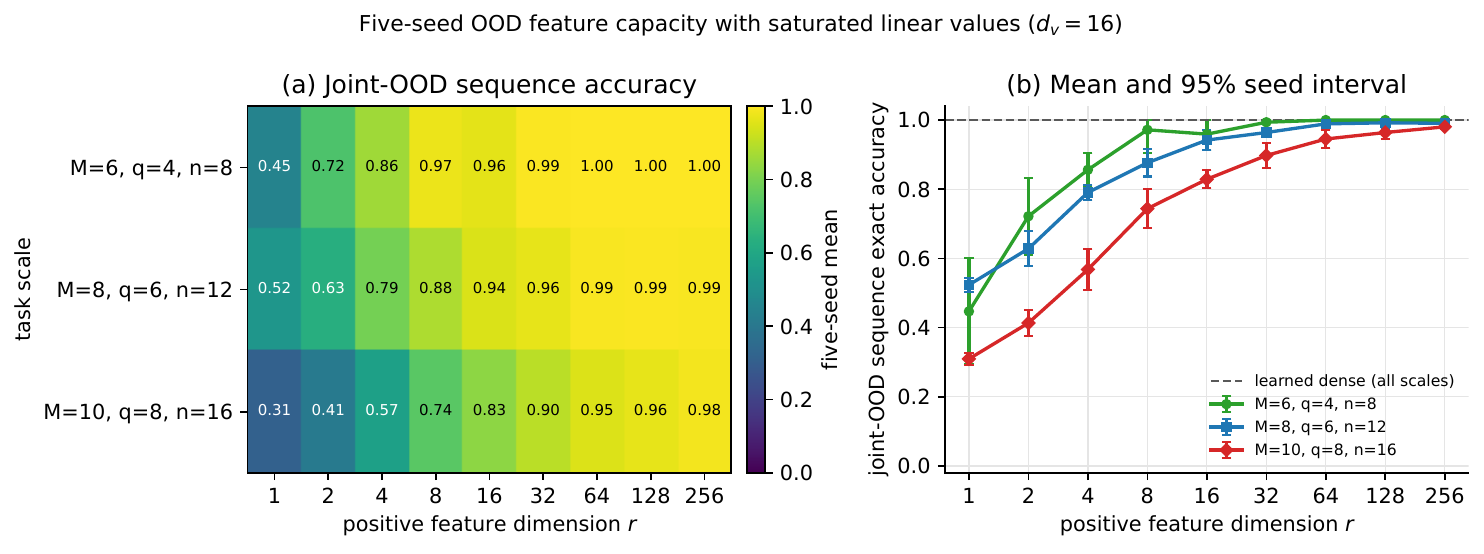}
\caption{Joint-OOD exact-sequence accuracy with saturated linear values ($d_v=16$): five-seed means in (a) and 95\% Student-$t$ intervals in (b). The $90\%$ transition is $8\to16\to64$, and the dashed dense control is exact.}
  \label{fig:rank-phase}
\end{figure}

\section{Related work}

All-query minimum-inner-product retrieval has already been used to distinguish full and sparse attention: \citet{zaheer2020bigbird} derive an Orthogonal-Vectors-Conjecture-conditional sparse-attention depth separation. \citet{alman2025fundamental} give conditional runtime lower bounds for global document-similarity and Min-IP variants against all truly subquadratic algorithms. Our narrower unconditional theorem lower-bounds the feature dimension governing the standard linear-attention sketch.

Several works identify complementary rank and depth bottlenecks. \citet{bhojanapalli2020lowrank} study query/key score rank for realizing attention matrices. \citet{amsel2025quality} already prove score-rank/head separations for nearest-neighbor approximation on short sequences, including a three-vector setup. We therefore do not claim the first short-context attention-rank separation. Their lower bound concerns the rank of the per-head linear projection used to form attention scores, together with head count, under a distributional approximation criterion. Ours concerns domain-level nonnegative kernel feature rank for one head under worst-case strict error. \citet{sanford2024transformers} and \citet{ye2026hybrid} establish complementary depth/resource separations for kernelized or hybrid linear/full attention. These resources and quantifiers are not interchangeable with Theorem~\ref{thm:three-token}.

The closest representation-level analogy is the separation between explicit multi-vector interaction and a compressed single-vector embedding. \citet{jayaram2026multivector} and \citet{jayaram2026maxip} lower-bound the dimension required for a single inner product to approximate Chamfer/MAX-IP similarities, even for data-dependent embeddings. Their resource is the dimension of a single-vector embedding of a growing point cloud, equivalently the approximate rank of the resulting similarity matrix. We instead fix two candidates, grow only token dimension, and bound the tokenwise nonnegative kernel feature rank inside normalized attention, despite arbitrary values and query-dependent affine decoding. Their results therefore do not imply our exponential fixed-candidate bound. \citet{weller2026limitations} separately study dimension-limited top-$k$ retrieval. We do not claim the first general pairwise-versus-compressed representation separation.

Kernelized recurrent implementations introduced the linear-attention sketch view~\citep{katharopoulos2020transformers}. Performer supplies positive random features for softmax approximation~\citep{choromanski2021performers}. Approximation of individual kernel entries does not by itself guarantee exact minimum retrieval after normalization. KATA studies nonnegative feature geometry and associative capacity through symmetric cones~\citep{ghriss2026kata}. Our result is complementary because it is a worst-case task lower bound over arbitrary asymmetric nonnegative feature maps and arbitrary tokenwise values with affine query-dependent decoding. Signed and zero-sum attention is an important escape route rather than a class covered by our theorem~\citep{lu2026zerosum}.

Communication and finite-state bounds for recall and recurrent models are established tools. Examples include \citet{arora2024based}, \citet{bhattamishra2024separations}, and \citet{zhou2026triangle}. Our finite-precision contribution is the multilevel Min-IP embedding and its exact transcript accounting, not the generic injectivity principle.

\section{Limitations and conclusion}

The headline theorem excludes signed kernels, multiple heads or layers, hybrid branches, recurrence, and nonlinear exact-real decoding. Its positional extension charges tokenwise position maps but not a separate cross-token channel. The finite-transcript theorem covers broader models only with finite-alphabet cross-token channels and says nothing about unrestricted exact-real states. For explicit features, rank $r$ sets the $r(d_v+1)$- coordinate sketch size and cost, but not an implementation-independent runtime lower bound. Training failure may reflect optimization and finite-family success may reflect memorization, so the experiments are interpreted separately.

Within this visible scope, the separation appears at the first context length with two competing candidates. Rank one is exact through length two, but length three already requires $2^{\Omega(m)}$ features. As fixed context length grows, the exponent approaches the exact $2^m$ endpoint. Dense softmax instead exposes the three pairwise comparisons using $m$-dimensional scores and constant temperature. Multiple heads and layers are genuine escape routes from the one-head rank theorem. At finite precision, the multilevel family shows that their total communicated information must still scale with the number of independent answers.

\bibliography{ref}
\bibliographystyle{plainnat}

\end{document}